\title{Combating Spurious Correlations in Graph Interpretability via Self-Reflection}
\author{Kecheng Cai, Chenyang Xu, Chao Peng, Jiafu Huang, Qiyuan Liang, Irene Zheng}
\date{January 2026}
\begin{document}

\maketitle

\begin{abstract}
  Interpretable graph learning has recently emerged as a popular research topic in machine learning. The goal is to identify the important nodes and edges of an input graph that are crucial for performing a specific graph reasoning task. A number of studies have been conducted in this area, and various benchmark datasets have been proposed to facilitate evaluation. Among them, one of the most challenging is the Spurious-Motif benchmark, introduced at ICLR 2022. The datasets in this synthetic benchmark are deliberately designed to include spurious correlations, making it particularly difficult for models to distinguish truly relevant structures from misleading patterns. As a result, existing methods exhibit significantly worse performance on this benchmark compared to others.

In this paper, we focus on improving interpretability on the challenging Spurious-Motif datasets. We demonstrate that the self-reflection technique, commonly used in large language models to tackle complex tasks, can also be effectively adapted to enhance interpretability in datasets with strong spurious correlations. Specifically, we propose a self-reflection framework that can be integrated with existing interpretable graph learning methods. When such a method produces importance scores for each node and edge, our framework feeds these predictions back into the original method to perform a second round of evaluation. This iterative process mirrors how large language models employ self-reflective prompting to reassess their previous outputs. We further analyze the reasons behind this improvement from the perspective of graph representation learning, which motivates us to propose a fine-tuning training method based on this feedback mechanism. This method leads to further performance improvements. Experimental results show that our framework significantly improves existing methods, not only on Spurious-Motif but also on other popular graph interpretability benchmarks.
\end{abstract}
%%% Use this command to specify a few keywords describing your work.
%%% Keywords should be separated by commas.

\section{Introduction}

% \begin{itemize}
%     \item Interpretable Graph Learning 
% \end{itemize}

% Interpretable graph learning is an emerging area in machine learning, aiming to make graph neural networks not only accurate but also understandable.  While traditional graph learning methods often function as black boxes, interpretability enables practitioners to understand the relationships and patterns captured by the model, which is particularly critical in applications like drug discovery, social network analysis, and fraud detection. By identifying important nodes, edges, or substructures, interpretable graph learning ensures transparency, fosters trust in the models, and facilitates debugging and validation processes. Moreover, it aligns with the growing demand for ethical AI, where explainability is essential to ensure fairness and accountability in decision-making. Consequently, advancing interpretable graph learning is pivotal for unlocking the full potential of graph-based machine learning in real-world applications.

Interpretable graph learning is an emerging area in machine learning, aiming to make graph neural networks not only accurate but also understandable. While traditional graph learning methods often function as black boxes, studying their interpretability helps us figure out the relationships and patterns captured by these models. This is particularly important in high-stakes applications like drug discovery~\cite{DBLP:journals/jcc/ZengFLLWL24}, social network analysis~\cite{DBLP:journals/isci/KumarMKP22}, and fraud detection~\cite{DBLP:journals/tbd/LiYLW23}. 
The main task of interpretable graph learning is to identify important nodes, edges, or substructures. As many real-world complex tasks involve graph-structured data, identifying these elements can enhance transparency, foster trust in AI models, and facilitate debugging and validation processes.
% The main task of interpretable graph learning is to identify important nodes, edges, or substructures. 
% Identifying these elements can ensure transparency, promote trust in AI models, and facilitate debugging and validation processes. Moreover, interpretable graph learning aligns with the growing demand for ethical AI~\cite{xxx}, where interpretability is essential to ensure fairness and accountability in decision-making.

There has been a growing body of work on interpretable graph learning in recent years~\cite{magister2022encoding,zhang2022protgnn,ragno2022prototype,miao2022interpretable,chen2022learning,serra2024l2xgnn}. Despite these advances, many existing methods still face a critical challenge: they are highly susceptible to \emph{spurious correlations}. In both real-world and synthetic scenarios, models may attend to graph components that are statistically correlated with the target label but are not causally responsible for the prediction. This problem is particularly pronounced in benchmarks such as Spurious-Motif~\cite{wu2022discovering}, where distractor subgraphs are deliberately introduced during training. As a result, explanation methods often highlight irrelevant structures, thereby undermining their reliability and generalization. These highlighted components, though predictive during training, are typically associated with spurious correlations and do not reflect the true decision logic of the model.

%This degradation is especially evident on the Spurious-Motif benchmark, where the performance of existing methods is significantly worse compared to other datasets.

% Interestingly, this phenomenon is similar to the hallucination problem~\cite{maynez-etal-2020-faithfulness,survey-of-hallucination} observed in large language models, where the model generates factually incorrect or logically inconsistent content. In the context of interpretable graph learning, highlighting spurious substructures can be seen as a form of \emph{explanation-level hallucination}, where the model's reasoning fails to align with the underlying ground truth.

% Recent work in natural language processing has shown that self-reflection mechanisms, in which a model iteratively critiques and refines its own output, can effectively reduce hallucinations and enhance internal consistency~\cite{DBLP:conf/nips/MadaanTGHGW0DPY23,DBLP:conf/nips/ShinnCGNY23}. This naturally leads to the question: could such mechanisms also prove effective in the context of interpretable graph learning? This question motivates our work.

In large language models (LLMs), hallucination—the generation of factually incorrect or logically inconsistent content—has been widely documented~\cite{maynez-etal-2020-faithfulness,survey-of-hallucination}. An analogous failure mode arises in interpretable graph learning: when an explainer highlights spurious substructures that do not support the model’s prediction, the resulting rationale diverges from ground truth. We term this an explanation-level hallucination, emphasizing that the issue lies not only in predictive accuracy but in the faithfulness of the explanation itself.

A promising remedy in NLP is self-reflection, where a model explicitly reasons, critiques, and revises before finalizing its output. A prominent instantiation is Chain-of-Thought (CoT) and its variants, which encourage step-by-step reasoning prior to the final answer and have been shown to improve robustness and reduce hallucinations by leveraging internal knowledge and verification~\cite{DBLP:journals/corr/abs-2303-11366,DBLP:conf/acl/DhuliawalaKXRLC24,DBLP:journals/corr/abs-2310-06271,DBLP:conf/nips/KojimaGRMI22,DBLP:journals/corr/abs-2310-03951,DBLP:journals/corr/abs-2307-05300}. Motivated by these advances, we ask: can self-reflection likewise mitigate explanation-level hallucination in graph explanation? This question motivates our approach.

\subsection{Our Contributions}

This paper proposes a novel approach to improving interpretability in graph neural networks through self-reflection. Our main contributions are summarized as follows:
\begin{itemize}
    \item We introduce a lightweight, training-free framework that iteratively refines edge-level importance scores by repeatedly feeding the output of an interpretable model back into itself, allowing it to perform a form of self-reflection. This mechanism promotes internal consistency and progressively filters out spurious edges. Notably, our framework operates without modifying the original model architecture, making it easily compatible with a wide range of existing interpretable methods.

    \item  We provide a formal optimization perspective of the proposed self-reflection framework and prove that optimal solutions exhibit consistency across iterations (\cref{thm:consistent}). Based on this theoretical insight, we further propose a fine-tuning strategy specifically tailored to the framework, enhancing its effectiveness.

    \item We conduct experiments on the Spurious-Motif benchmark and other datasets, showing that our self-reflection framework, which repeatedly applies an interpretable model to refine its own explanations, can consistently improve classification accuracy, especially under strong spurious correlations. In addition, we demonstrate that a simple fine-tuning strategy further improves performance, particularly in terms of AUC, by stabilizing the explanation process across iterations.

\end{itemize}

\subsection{Other Related Works}

% \begin{itemize}
%     \item Graph Neural Networks
%     \item Post-Hoc Explanation
% \end{itemize}

\paragraph{Graph Neural Networks.} Graph neural networks (GNNs) have emerged as a powerful tool for capturing and leveraging structural relationships in graph-structured data. Numerous neural architectures have been proposed in this domain, such as Graph Convolution Networks (GCN)~\cite{gcn}, Graph Attention Networks (GAT)~\cite{gat}, Message Passing Neural Networks (MPNNs)~\cite{mpnn}, etc. The fundamental idea behind these architectures is to iteratively aggregate information from each node's neighbors and update node and edge feature representations  accordingly, which are ultimately combined to perform predictions for graph reasoning tasks.

\paragraph{Post-hoc Explanation.}
Beside interpretable graph learning, another prominent approach for understanding GNNs is \emph{post-hoc explanation}, which aims to provide insights into a model’s predictions after training. A variety of post-hoc explanation methods have been proposed, including GNNExplainer~\cite{ying2019gnnexplainer}, GraphLIME~\cite{huang2022graphlime}, and PGExplainer~\cite{luo2020parameterized}.
These post-hoc methods adapt well to the non-IID nature of graph data by directly analyzing the graph structure, offering valuable insights. 

\paragraph{Beyond Self-Reflection: Mitigating LLM Hallucinations.}
Safety alignment via supervised/RLHF fine-tuning reduces hallucinations in practice (e.g., InstructGPT, Llama 2), though it can introduce an “alignment tax” and forgetting~\cite{DBLP:conf/nips/Ouyang0JAWMZASR22,DBLP:journals/corr/abs-2307-09288}. Other common approaches include orthogonal decoding-time methods and post-hoc verification with retrieval/tool-augmented editing: the former steer layer-contrastive logits or truth-correlated activations toward factual continuations without retraining~\cite{DBLP:conf/iclr/ChuangXLKGH24,DBLP:conf/naacl/ShiHLTZY24}, while the latter audit and revise drafts using external evidence or programmatic checks~\cite{DBLP:conf/iclr/GouSGSYDC24,DBLP:journals/corr/abs-2305-14002}. We leverage insights from reinforcement learning to guide the proposed method and will elaborate on them in Section~\ref{sec:ft}.
\section{Preliminaries}\label{sec:pre}

This section formally introduces the problem of interpretable graph learning, provides the necessary background, and describes the challenging Spurious-Motif benchmark that serves as the focus of our experimental study. 

\subsection{Problem Definition}

The interpretable graph learning problem is formulated as follows. 
Given a graph reasoning task with an input graph $G = (V, E)$ and a target label $Y$ (e.g., in a graph classification task, the label $Y$ represents the category of the graph), let $S \subseteq G$ denote a subgraph. 
Define $I(S; Y)$ and $I(S; G)$ as the Shannon mutual information between the subgraph $S$ and the label $Y$, and between the subgraph $S$ and the original graph $G$, respectively. 
The goal is to find a subgraph $S$ with either bounded graph size or bounded $I(S; G)$ that maximizes $I(S; Y)$. 

These constraints collectively provide a principled way to extract important subgraphs as explanations, balancing interpretability and fidelity to the original graph. Formally,    
\begin{equation}
\label{eq:model}
    \begin{aligned}
    &\max_{S \subseteq G} \; I(S; Y) \\
    \text{s.t.} \quad & |S| \leq K \quad \text{or} \quad I(S; G) \leq \gamma~,
\end{aligned}
\end{equation}
where $K$ and $\gamma$ are given upper bounds. Note that, as in many prior works, the subgraph $S$ can be relaxed to allow fractional edges: if $z_e$ is used as an indicator for whether an edge $e$ is included in $S$, instead of restricting $z_e$ to be binary (0 or 1), we can let $z_e$ take any value in $[0, 1]$, representing the fraction of edge $e$ included in the subgraph $S$.

\subsection{L2X Architecture}
Most existing studies follow the L2X architecture proposed by~\cite{chen2018learning}, an interpretable graph learning framework built on graph neural networks and attention mechanisms.
In this architecture, there are two components: an \emph{upstream} module and a \emph{downstream} module. %See an illustration in~\cref{fig:structure}.
% The basic idea of this paradigm is to assign an importance score $w_e \in [0, 1]$ to each edge $e$ of the input graph and then greedily select edges based on their scores, subject to a given constraint. 
% \cite{xxx} further proposes an architecture to learn these edge scores, 
% which is built on graph neural networks (GNNs) and the attention mechanism.
% The entire architecture is divided into two components: an upstream module and a downstream module.  
\begin{itemize}
    \item The upstream module evaluates the importance of each edge in the graph. Specifically, given a graph $G$, the upstream GNN $\vF(\cdot)$ predicts a fractional value $z_e \in [0, 1]$ for every edge $e \in E$, which can be interpreted as a score reflecting the importance of that edge. 
    \item Then, the downstream module utilizes them to predict the target label. The module uses the edge scores from the upstream module as an attention mask
    %\footnote{Note that some prior works~\cite{serra2024l2xgnn}, to generate hard-masked subgraph by design, do not directly use $\{x_e\}_{e \in E}$; instead, they apply techniques such as Gumbel Softmax to process these values and generate the attention mask.} 
    $\mask = \{z_e\}_{e \in E}$, which is element-wise multiplied with the original graph to produce a masked graph $G \odot \mask$. This masked graph is then fed into the downstream GNN $\vD(\cdot)$ to predict the target label $\pred$. 
Note that for the masked graph computation in the downstream GNN, when each node aggregates information from its neighbors, the contribution of each neighbor is weighted by the corresponding~$z_e$.

\end{itemize}

% Note that for the masked graph computation in network $\cG$, when aggregating information from a neighbor at each node, the contribution of neighbors are weighted by their corresponding $x_e$.

% In particular, we use the edge importance scores from the upstream module as an attention mask $M = \{w_e\}_{e \in E}$ 

% and combine it with the downstream GNN $\cG$: during the aggregation of information from neighbors at each node in network $\cG$, the contributions of neighbors are weighted by their corresponding edge importance scores $w_e$. Denote this attention-masked GNN as $\cG_{M}$. The downstream module applies $\cG_{M}$ to the input graph $G$ to predict the target label. 

% An illustration is provided in~\cref{xxx}. The architecture then trains both the upstream and downstream GNNs to minimize the combined loss function---the upstream module incorporates a loss function designed for the edge importance scores to ensure that the masked graph $G \odot M$ satisfies the given constraints as much as possible, while the downstream loss directly corresponds to the prediction loss for the target label.

%An illustration is provided in~\cref{xxx}. 
The architecture then trains both the upstream and downstream network parameters to minimize the combined loss:
\[\cL = \cL_{\text{up}}(\mask) + \cL_{\text{down}}(\pred)~. \]
The upstream module employs a loss $\cL_{\text{up}}(\mask)$ for the edge scores to ensure that the masked graph satisfies the given constraints, while the downstream loss $\cL_{\text{down}}(\pred)$  directly corresponds to the prediction loss for the target label. After training, the resulting $S = G \odot \mask$ is the returned subgraph.

\subsection{Spurious-Motif Benchmark}

The Spurious-Motif benchmark is a synthetic dataset designed to evaluate the robustness of interpretable graph learning methods in the presence of spurious correlations. Originally introduced by~\cite{wu2022discovering}, it has since been widely adopted in the interpretable graph learning and graph invariance learning literature~\cite{DBLP:DIR}. 

The benchmark comprises a series of datasets parameterized by a spurious correlation factor \( b \), with each dataset containing 18{,}000 graphs.
Each graph is constructed by combining one base structure and one motif. The base structures include \textit{Tree}, \textit{Ladder}, and \textit{Wheel}, denoted by \( S \in \{0, 1, 2\} \), while the motifs include \textit{Cycle}, \textit{House}, and \textit{Crane}, denoted by \( C \in \{0, 1, 2\} \). Given a bias parameter \( b \), each graph is generated by first selecting a motif of type \( C \), and then sampling a base type \( S \) according to the following distribution:
\[
P(S) = b \cdot \mathbb{I}(S = C) + \frac{1 - b}{2} \cdot \mathbb{I}(S \neq C),
\]
where \( \mathbb{I}(\cdot) \) denotes the indicator function.
In other words, if the base type matches the motif type (\( S = C \)), it is selected with probability \( b \); otherwise, one of the mismatched base types is selected with equal probability \( \frac{1 - b}{2} \). The final graph is formed by attaching a randomly sampled base structure of the selected type to the chosen motif.

The ground-truth label \( Y \) is determined solely by the motif type \( C \), making the motif the true causal factor. In contrast, the base type is a distractor that introduces spurious correlation with the label. As the bias parameter \( b \) increases, the alignment between base type and motif type strengthens, making it increasingly difficult for learning algorithms to distinguish true causal structures from spurious ones.

Following prior work~\cite{miao2022interpretable,DBLP:DIR}, our experiments use datasets where the training data are constructed with \( b = 0.5, 0.7, \) and \( 0.9 \), representing increasing levels of spurious bias. In the test data, however, we fix \( b = \frac{1}{3} \), so that base types are sampled independently of motif types. This setup poses two key challenges:
\begin{itemize}
  \item \textbf{Spurious Correlation in Training:} When the bias \( b \) is high, the base type becomes strongly correlated with the class label, even though it is not causally related. As a result, models tend to rely on the spurious base structure for prediction, which can mislead explanation methods into identifying irrelevant subgraphs.
  
  \item \textbf{Distribution Shift at Test Time:} Since the spurious correlation is removed in the test set (\( b = \frac{1}{3} \)), models that overfit to the base structures may suffer a significant drop in both accuracy and interpretability. This distribution shift places strong demands on explanation methods to distinguish causal features from misleading correlations and remain robust under changing data distributions.
\end{itemize}

\section{Self-Reflection for Interpretable Graph Learning}\label{sec:framework}

This section investigates how self-reflection, a technique inspired by recent progress in large language models, can be applied to improve the interpretability of graph neural networks. Instead of modifying the model architecture or retraining the network, we propose a lightweight and training-free framework that enhances explanation quality by iteratively refining the model’s own interpretation outputs.
% We first introduce the self-reflection framework, including its design motivation and iterative procedure, and then present empirical findings that demonstrate its effectiveness.

% This section aims to explore the question raised in the introduction. We first propose a \emph{recursive interpretation framework} (RIF) and then conduct some empirical evaluations of this framework, which give a positive answer to the question.

\begin{figure}[ht]
    \centering
    \includegraphics[width=0.95\linewidth]{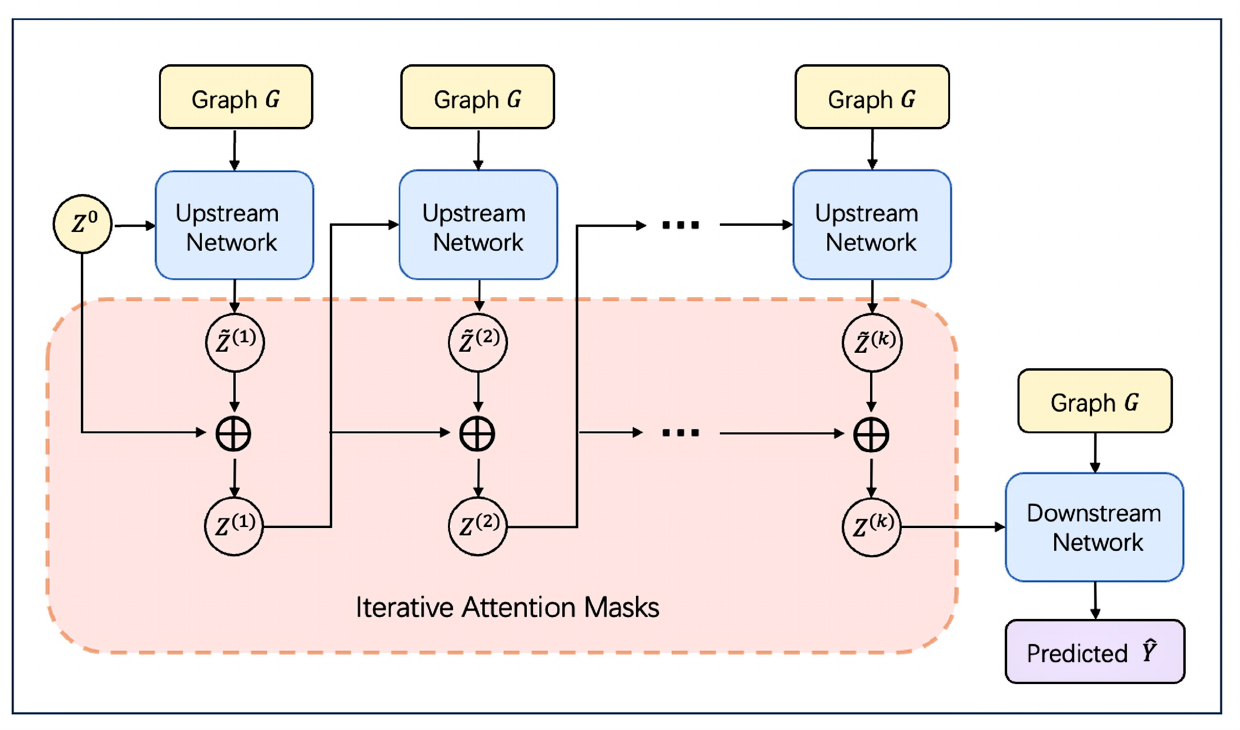} % Adjust width as needed
    \caption{An illustration of the self-reflection framework.}
    \label{fig:structure}
\end{figure}

% \begin{figure*}[ht]
%     \centering
%     \subfigure[Interpretation Performance]{
%         \includegraphics[width=0.48\textwidth]{figure/AUCBar.pdf} % 图片路径
%         \label{fig:recursive_results_k2_auc}
%     }
%    \hfill % 填充空白，使两张图片分开   
%     \subfigure[Prediction Performance]{
%         \includegraphics[width=0.48\textwidth]{figure/AccBar.pdf} % 图片路径
%         \label{fig:recursive_results_k2_acc}
%     }
%     \caption{The performance of applying RIF to \gsat and \lxgnn across various graph-structured datasets}
%     \label{fig:recursive_results_k2}
% \end{figure*}

%\subsection{Framework Description}

% We build upon the aforementioned L2X architecture to develop our framework. Observe that the upstream module's role is to provide edge importance scores for a (downstream) graph reasoning task. Since predicting edge scores is itself a graph reasoning task, the upstream module can be applied recursively to provide edge importance scores for its own upstream task.

We build upon the aforementioned L2X architecture to develop our framework. A key component of the framework is an iterative procedure that feeds the output of the upstream module back into itself. This self-reflective step allows the model to reassess the importance of each edge within a given subgraph. Notably, the model architecture remains unchanged throughout the process, and no additional training is required.

An illustration of the self-reflection framework is provided in~\cref{fig:structure}. In this framework, the upstream network is applied \( k \) times in an iterative manner. The process begins with an initial mask \( \mask^{(0)} := \{z_e^{(0)} = 1\}_{e \in E} \), which retains all edge information without any masking. 
At the \( t \)-th iteration, the masked graph \( G \odot \mask^{(t-1)} \) is passed to the upstream network \( \vF(\cdot) \), producing a new soft mask \( \tilde{\mask}^{(t)} \):
\[
\tilde{\mask}^{(t)} = \vF\left(G \odot \mask^{(t-1)}\right)~.
\]
This mask \( \tilde{\mask}^{(t)} \) reflects the importance of edges within the already masked graph \( G \odot \mask^{(t-1)} \). Consequently, the updated important subgraph at this iteration is effectively represented as \( \left(G \odot \mask^{(t-1)}\right) \odot \tilde{\mask}^{(t)} \). 
To preserve this structure, we update the mask\footnote{An ablation study of this design is provided in the appendix.} by element-wise multiplication:
\[
\mask^{(t)} = \tilde{\mask}^{(t)} \cdot \mask^{(t-1)}~,
\]
and use the newly masked graph \( G \odot \mask^{(t)} \) as input for the next iteration.
 After $k$ iterations, the final mask $\mask^{(k)}$ is obtained and passed to the downstream module to predict the target label. 
% \[
% \hat{Y} = \vD\left(G \odot \mask^{(k)}\right)~.
% \]

\paragraph{Monotonicity Remark.}
We note that this element-wise update scheme naturally induces a monotonicity property in the mask across iterations. Specifically, since each new mask is obtained by multiplying the current prediction with the previous mask, the importance score assigned to each edge cannot increase over time. 
Such monotonicity is important for maintaining stability in the self-reflection process. It prevents the upstream module from assigning high importance to edges that have already been substantially downweighted in earlier iterations, thereby reducing the risk of relying on noisy or irrelevant structures. This is analogous to the overestimation issue in offline Q-learning~\cite{DBLP:conf/icml/FujimotoMP19}, where high values may be assigned to unseen or unsupported states. In contrast, our framework suppresses such behavior by construction, leading to more consistent and reliable explanations.

\begin{figure*}[tb]
    \centering
    \subfigure[Accuracy across self-reflection iterations.\label{fig:acc}]{
        \includegraphics[width=0.76\linewidth]{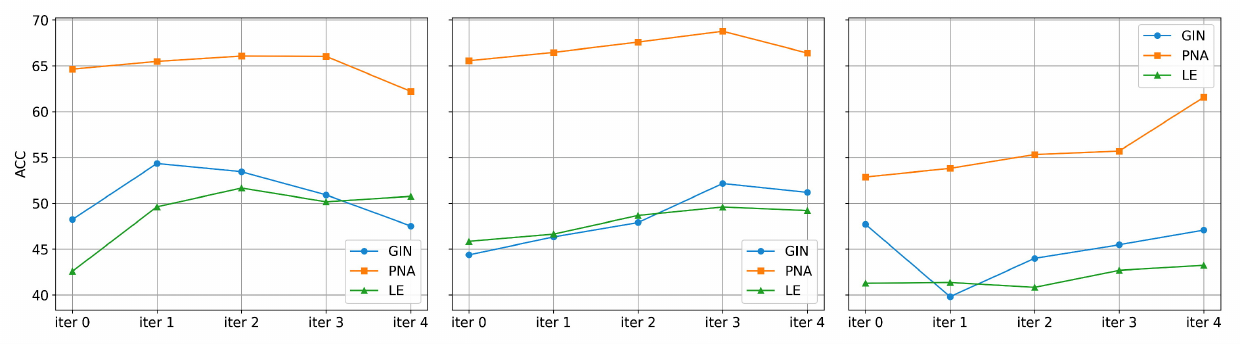}
        \vspace{1em}
    }

    \subfigure[AUC across self-reflection iterations.\label{fig:roc}]{
        \includegraphics[width=0.76\linewidth]{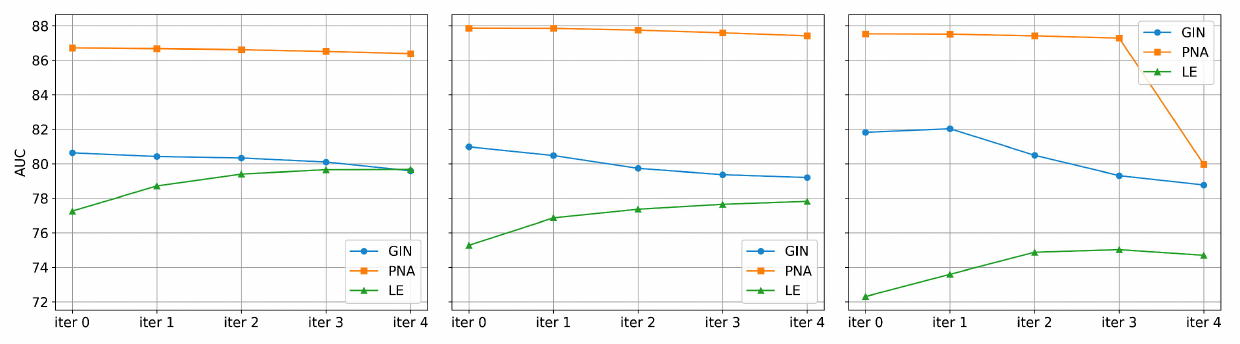}
    }

    \caption{Performance trends under the self-reflection framework. From left to right, the plots correspond to datasets with spurious correlation levels \( b = 0.5 \), \( 0.7 \), and \( 0.9 \), respectively.}
    \label{fig:performance}
\end{figure*}

\begin{figure*}[tb]
    \centering
    \includegraphics[width=0.77\linewidth]{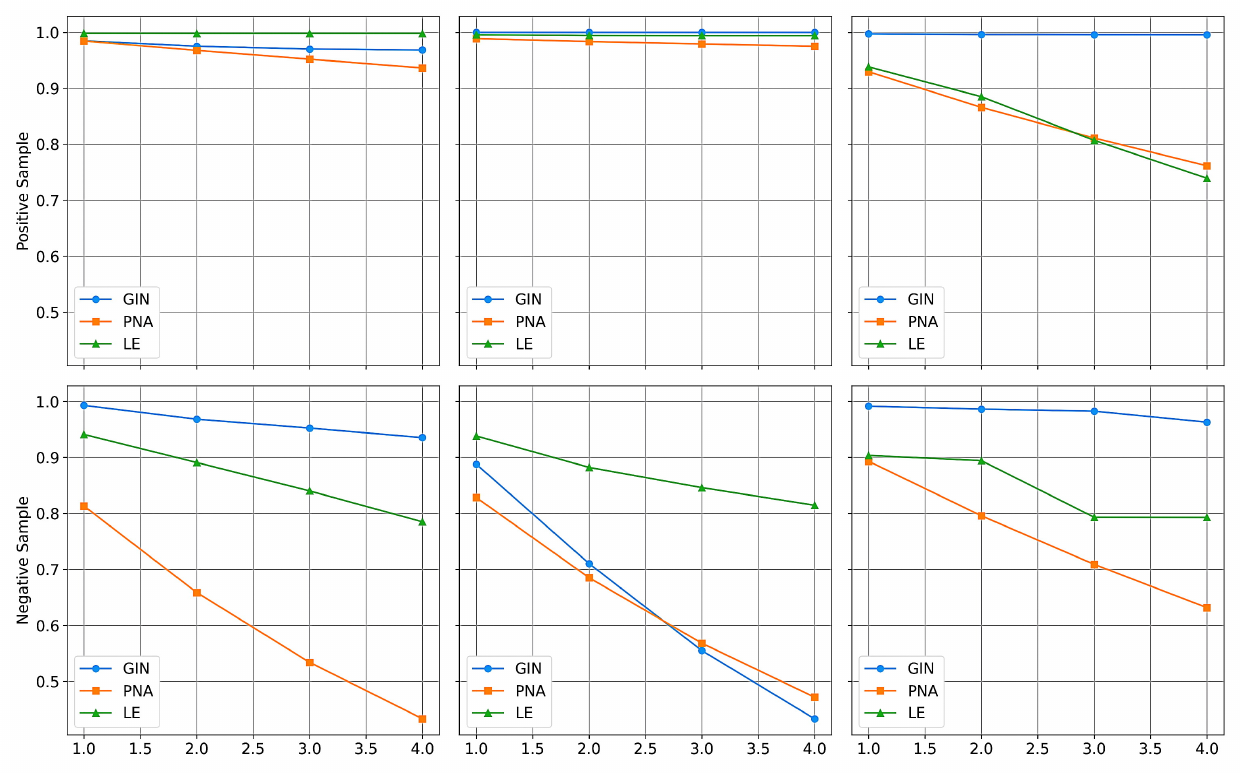}
    \caption{An illustration of how edge importance scores evolve across self-reflection iterations. The top row shows the average edge scores for positive samples, while the bottom row shows those for negative samples. From left to right, the plots correspond to datasets with spurious correlation levels \( b = 0.5 \), \( 0.7 \), and \( 0.9 \), respectively.}
    \label{fig:score}
\end{figure*}

\section{Empirical Study of Self-Reflection}~\label{sec:framework_exp}

%\xcy{Working here}

In this section, we conduct an empirical investigation of the behavior and effectiveness of the proposed self-reflection framework. Our goal is to evaluate how iterative self-reflection influences both prediction performance and explanation quality in interpretable graph learning. In particular, we aim to understand the following questions:

\begin{itemize}
    \item Does the framework consistently improve task accuracy across iterations?
    \item Do the generated explanations become more focused and stable over time?
    % \item How do these trends vary under different levels of spurious correlation?
\end{itemize}

We begin by outlining the experimental setup, followed by a detailed presentation and analysis of the results.

\subsection{Setup}

In our experiments, we apply the self-reflection framework to a recently popular interpretable graph learning method: \gsat\cite{miao2022interpretable}, using several GNN backbone architectures that are widely adopted in interpretable learning, including GIN~\cite{DBLP:conf/iclr/XuHLJ19}, PNA~\cite{DBLP:conf/nips/CorsoCBLV20}, and LE~\cite{DBLP:conf/aaai/RanjanST20}.

%\footnote{For clarity, a detailed \gsat description is given in the appendix.}

\paragraph{Datasets.} 
As mentioned earlier, our experiments are primarily conducted on the Spurious-Motif benchmark, using the spurious correlation parameter \( b \in \{0.5, 0.7, 0.9\} \), following the standard setting in~\cite{miao2022interpretable}. We note that similar experiments have also been conducted on several other widely used benchmarks, including BA-2Motifs~\cite{luo2020parameterized}, Mutag~\cite{debnath1991structure}, MolHIV~\cite{wu2018moleculenet}, MolBACE~\cite{wu2018moleculenet}, and MolBBBP~\cite{wu2018moleculenet}. Due to space constraints, these additional results are presented in the appendix.

\paragraph{Evaluation Metrics.} Following prior work, we evaluate performance using two metrics: prediction accuracy on the downstream classification task and the area under the ROC curve (AUC). To clarify, given a set of predicted edge importance scores, AUC measures the probability that a randomly chosen important edge is assigned a higher score than a randomly chosen unimportant edge.

\paragraph{Computational Details.} 
We first train each model using the standard hyperparameter configuration provided in~\cite{miao2022interpretable}. The trained networks are then integrated into our self-reflection framework without further parameter updates. All experiments are conducted on a machine equipped with an NVIDIA RTX 4090 GPU, and the reported results are averaged over four independent runs.

\subsection{Results and Analysis}

We evaluate the performance of the training-free self-reflection framework across different GNN backbone architectures (GIN, PNA, and LE) under varying levels of spurious correlation (\(b = 0.5, 0.7, 0.9\)). The results are reported in terms of classification accuracy and AUC over multiple reflection iterations, as shown in~\cref{fig:performance}.

\Cref{fig:acc} shows that the self-reflection framework consistently improves classification accuracy across most settings, particularly under higher levels of spurious correlation. Accuracy generally increases over the first few iterations, indicating that the model benefits from refining its attention masks. This suggests that iterative refinement effectively filters out spurious edges, allowing the model to focus more on causally relevant substructures.

\paragraph{Diminishing Returns of Self-Reflection.} We also observe that, after a certain number of iterations, performance may begin to decline slightly. Interestingly, the iteration at which this turning point occurs varies with the spurious correlation parameter \( b \); in general, the higher the value of \( b \), the later the peak performance is reached. This behavior resembles a known phenomenon~\cite{DBLP:journals/corr/abs-2412-21187} in large language models, where excessive self-reflection on simple problems can lead to degraded outputs. In such cases, further reflection introduces unnecessary modifications or noise, ultimately harming performance rather than improving it.

\paragraph{Divergence Between Accuracy and AUC.}\Cref{fig:roc} shows that, in contrast to accuracy, AUC remains relatively stable across iterations. Interestingly, we also observe a phenomenon previously noted in prior work~\cite{miao2022interpretable}: accuracy and AUC can sometimes exhibit diverging trends. This is most notable in the case of PNA on the dataset with \( b = 0.9 \), where the accuracy increases sharply at iteration 4, while the AUC drops.

% This divergence suggests a shift in the model’s behavior during self-reflection. One possible explanation is that the model becomes more confident in a smaller subset of edges, which may strengthen its prediction ability while reducing the spread of importance scores across the full edge set. As a result, AUC, which is a ranking-based metric over all edges, may decline even when the model is focusing more effectively on task-relevant structures. 

This divergence suggests a shift in the model’s behavior during self-reflection. One possible explanation is that the model becomes more confident in a smaller subset of edges, which may strengthen its prediction ability while reducing the spread of importance scores across the full edge set. As a result, AUC, which is a ranking-based metric over all edges, may decline even when the model is focusing more effectively on task-relevant structures.

To examine this hypothesis, we sampled a subset of edges from the dataset and tracked how their importance scores evolve across iterations. The results are shown in~\cref{fig:score}, where positive samples refer to edges labeled as important in the ground truth, and negative samples refer to those deemed unimportant.
We observe that the scores of positive edges remain consistently high across iterations. In contrast, the scores of negative edges decrease rapidly with each iteration, indicating that the model becomes increasingly confident in suppressing irrelevant or spurious edges.

Notably, some edges that may have initially received moderate scores are gradually de-emphasized over time. This effect is more pronounced for negative samples, but also appears in a small fraction of weakly important positive edges. These observations support our hypothesis that self-reflection encourages the model to concentrate attention on a more selective set of high-confidence edges, which can benefit prediction performance while altering the global ranking distribution used by AUC.

% Together, these results suggest that the training-free self-reflection framework can enhance model accuracy without sacrificing explanation fidelity. The performance gain is particularly evident when the spurious bias is strong, validating the framework’s potential for mitigating spurious correlations through iterative attention refinement.

\section{Enhancing Performance via Fine-Tuning}\label{sec:ft}

\begin{table*}[tb]
    \centering
    \footnotesize
    \caption{Performance of different methods. The best result is shown in bold, and the second-best is underlined.
    %The best-performing results in each column are highlighted in bold, while the second-best are underlined. 
    }
    \begin{tabular}{l c c c c c c c}
    \toprule
    \textbf{Methods} & \multicolumn{3}{c}{\textbf{ACC}} & \multicolumn{3}{c}{\textbf{AUC}} \\
    \cmidrule(lr){2-4} \cmidrule(lr){5-7}
     & \textbf{0.5} & \textbf{0.7} & \textbf{0.9} & \textbf{0.5} & \textbf{0.7} & \textbf{0.9} \\
    \midrule
GNNExplainer&-&-&-&62.62{\tiny$\pm$1.35}&62.25{\tiny$\pm$3.61}&58.86{\tiny$\pm$1.93}\\
PGExplainer&-&-&-&69.54{\tiny$\pm$5.64}& {72.33}{\tiny$\pm$9.18}&\text{72.34}{\tiny$\pm$2.91}\\
%GraphMask&&&&\text{72.06}{\tiny$\pm$5.58}&\text{73.06}{\tiny$\pm$4.91}&{66.68}{\tiny$\pm$6.96}\\
%IB-Subgraph&\underline{54.36}{\tiny$\pm$7.09}&48.51{\tiny$\pm$5.76}&\textbf{48.51}{\tiny$\pm$5.76}&57.29{\tiny$\pm$14.35}&62.89{\tiny$\pm$15.59}&47.29{\tiny$\pm$13.39}\\ 
%IRM&&&&&&\\
DIR&45.50{\tiny$\pm$2.15}&43.36{\tiny$\pm$1.64}&39.87{\tiny$\pm$0.56}&78.15{\tiny$\pm$1.32}&77.68{\tiny$\pm$1.22}&49.08{\tiny$\pm$3.66}\\
\hline
GSAT+LE&42.57{\tiny$\pm$1.98}&45.84{\tiny$\pm$2.74}&41.27{\tiny$\pm$1.39}&77.26{\tiny$\pm$1.62}&75.27{\tiny$\pm$1.60}&72.30{\tiny$\pm$1.60}\\ 
SR+LE &51.66{\tiny$\pm$7.64}&48.67{\tiny$\pm$7.40}&40.82{\tiny$\pm$3.04}&79.40{\tiny$\pm$0.77}&77.36{\tiny$\pm$1.27}&\underline{74.88}{\tiny$\pm$2.56}\\
FT-SR+LE &\textbf{54.39}{\tiny$\pm$6.86}&\textbf{57.98}{\tiny$\pm$5.73}&\underline{43.45}{\tiny$\pm$3.76}&\underline{81.45}{\tiny$\pm$1.39}&\underline{79.94}{\tiny$\pm$2.25}&73.82{\tiny$\pm$4.28}\\
\hline
%\rule{0pt}{2.3ex}
GSAT+GIN&52.74{\tiny$\pm$4.08}&49.12{\tiny$\pm$3.29}&\textbf{44.22}{\tiny$\pm$5.57}&78.45{\tiny$\pm$3.12}&74.07{\tiny$\pm$5.28}&71.97{\tiny$\pm$4.41}\\    
SR+GIN &53.44{\tiny$\pm$2.74}&47.90{\tiny$\pm$3.47}&43.99{\tiny$\pm$1.08}&80.33{\tiny$\pm$1.10}&79.73{\tiny$\pm$1.17}&\textbf{80.49}{\tiny$\pm$2.34}\\
FT-SR+GIN &\underline{54.11}{\tiny$\pm$2.26}&\underline{49.16}{\tiny$\pm$4.98}&42.47{\tiny$\pm$5.10}&\textbf{84.41}{\tiny$\pm$1.08}&\textbf{84.00}{\tiny$\pm$3.42}&80.07{\tiny$\pm$1.77}\\
\hline

% \hline
% GSAT+PNA&68.15{\tiny$\pm$2.39}&66.35{\tiny$\pm$3.34}&61.40{\tiny$\pm$3.56}&83.34{\tiny$\pm$2.17}&86.94{\tiny$\pm$4.05}&88.66{\tiny$\pm$2.44}\\ 
% RIF+PNA &67.98{\tiny$\pm$2.00}&67.58{\tiny$\pm$2.05}&55.70{\tiny$\pm$1.13}&86.61{\tiny$\pm$1.08}&87.74{\tiny$\pm$1.27}&87.41{\tiny$\pm$0.91}\\
% FT+PNA &69.16{\tiny$\pm$1.41}&69.81{\tiny$\pm$1.35}&59.44{\tiny$\pm$1.04}&82.58{\tiny$\pm$2.91}&84.06{\tiny$\pm$3.19}&81.12{\tiny$\pm$3.06}\\
\hline
    \end{tabular}
    \label{tab:OverallRoc}
\end{table*}

\begin{table*}[tb]
    \centering
    \footnotesize
    \caption{ACC Performance of different methods.The best result is shown in bold, and the second-best is underlined.
    }
    \begin{tabular}{l c c c c c c c c}
    \toprule
    \textbf{Methods} &\textbf{BA\_2motif}&\textbf{Mutag}&\textbf{Molbace}&\textbf{Molbbbp}&\textbf{Molhiv}  \\ \hline

GSAT+LE     &81.75 {\tiny$\pm$9.48}&90.78	{\tiny$\pm$1.23}&\textbf{75.52}	{\tiny$\pm$1.43}&55.45	{\tiny$\pm$1.89}    &76.75	{\tiny$\pm$0.85}    \\
SR+LE       &89.25 {\tiny$\pm$5.55}&87.32	{\tiny$\pm$2.03}&69.40	{\tiny$\pm$0.99}&54.28	{\tiny$\pm$1.10}    &96.81	{\tiny$\pm$0.09}\\
FT-SR+LE    &\underline{94.00} {\tiny$\pm$7.78}&90.37	{\tiny$\pm$0.17}    &60.52	{\tiny$\pm$1.77}&56.04	{\tiny$\pm$1.47}&96.82	{\tiny$\pm$0.18}\\
\hline
%\rule{0pt}{2.3ex}
GSAT+GIN    &\textbf{100.00} {\tiny$\pm$0.00}&91.38	{\tiny$\pm$0.61}&\underline{73.83}	{\tiny$\pm$1.65}&\textbf{62.51}	{\tiny$\pm$2.23}        &77.59	{\tiny$\pm$0.98}\\
SR+GIN      &\textbf{100.00} {\tiny$\pm$0.00}&\underline{91.55}	{\tiny$\pm$1.32}&68.25	{\tiny$\pm$1.28}&56.98	{\tiny$\pm$1.56}    &\textbf{96.88}	{\tiny$\pm$0.06}\\
FT-SR+GIN   &\textbf{100.00} {\tiny$\pm$0.00}&\textbf{92.36}	{\tiny$\pm$0.49}&61.83	{\tiny$\pm$3.32}&\underline{57.67}	{\tiny$\pm$1.25}    &\underline{96.85}	{\tiny$\pm$0.10}\\
\hline
\end{tabular}
\label{tab:OverallOtherDatasets}
\end{table*}

While the self-reflection framework improves interpretability in a post-hoc and training-free manner, we now explore whether its performance can be further enhanced through fine-tuning. This section introduces a fine-tuning strategy specifically designed for the framework, aiming to adapt the model more effectively to the iterative reasoning process.

\subsection{Rethinking the Training Objective}

The original \gsat method adopts a one-step explanation loss, which assumes that importance scores are computed based on a fixed input graph. However, under the self-reflection framework, the input graph evolves over iterations, and each round of prediction depends on the masked subgraph produced in the previous step. This mismatch renders the original loss formulation misaligned with the reflective inference process. Empirical evidence supporting this claim is provided in the appendix, where we show that directly reusing the original loss fails to yield consistent improvements when applied to the self-reflection framework.
This motivates the need for a new training objective that can encourage more consistent edge-level reasoning across iterations and improve downstream performance. 

We notice that in the self-reflection framework, each mask \( \mask^{(t)} \) is derived from the upstream network’s prediction over the masked graph \( G \odot \mask^{(t-1)} \). This recursive structure suggests that later masks are conditionally dependent on earlier ones and should ideally exhibit a form of semantic consistency. That is, if an edge is consistently marked as important across iterations, it should be strongly supported by the model’s internal representation. Conversely, fluctuations or instability in importance scores may indicate uncertainty or spurious attribution.

This perspective motivates the design of a training objective that explicitly encourages cross-iteration consistency in edge importance estimation. In what follows, we formalize this intuition by translating it into a theoretical formulation.

% This perspective motivates the design of a training objective that explicitly encourages cross-iteration consistency in edge importance estimation. In what follows, we translate this intuition into a concrete loss function that aligns with the iterative nature of the self-reflection framework.

% This calls for a new training objective that aligns with the iterative nature of the self-reflection framework. To guide the design of such an objective, we examine the self-reflection process from a theoretical perspective. Specifically, 
\subsection{Thereotical Insight}

We formulate the core optimization problem underlying the framework as follows. Without loss of generality, we assume that each subgraph is constrained by its mutual information with the original graph, i.e., \( I(S; G) \leq \gamma \).
\begin{equation}
\label{eq:rif}
    \begin{aligned}
    \max \; I(G\odot \mask^{(k)};& Y) \\
    \text{s.t.}  \qquad I(G\odot \mask^{(t)}; G) &\leq \gamma \qquad  \forall t\in [k], \\
     \quad \vF(G\odot \mask^{(t-1)}) \cdot  \mask^{(t-1)} &= \mask^{(t)} \;\;\; \forall t\in [k], \\
     \qquad \quad \;\;\;\;\; z_e^{(t)} \in &\; [0,1] \;\;\quad  \forall t\in [k],e\in E~. \\
\end{aligned}
\end{equation}

This formulation generalizes Problem~\eqref{eq:model}. The goal remains to maximize the mutual information between the final masked graph and the target label \( Y \). The first constraint is a natural extension of the one used in Problem~\eqref{eq:model}, enforcing an information bottleneck at each iteration. The second constraint is newly introduced by the self-reflection framework and captures the recursive relationship between masks across iterations. It ensures that each mask is generated by applying the upstream model to the previous mask.

We define the notation \( Z \succeq Z' \) to indicate that \( z_e \geq z_e' \) for all \( e \in E \). A mask sequence \( \{Z^{(t)}\}_{t \in [k]} \) is said to be \emph{monotone} if it satisfies \( Z^{(1)} \succeq Z^{(2)} \succeq \cdots \succeq Z^{(k)} \). Clearly, the sequence of masks generated by the self-reflection framework is monotone by construction. 

We now analyze the properties of the optimal masks. For simplicity, we assume in the theoretical analysis that the neural network \( \vF(\cdot) \) is sufficiently expressive to realize any monotone mask sequence through appropriate parameterization. The following theorem provides formal support for the consistency intuition introduced earlier.

% In this subsection, we ideally assume that the neural network $\vF(\cdot)$ is sufficiently flexible to accommodate any set of monotone attention masks $\{Z^{(t)}\}_{t \in [k]}$ by setting its parameters accordingly. An attention mask set is considered monotone if, for all $e \in E$, the following holds: $z_e^{(1)} \geq z_e^{(2)} \geq \cdots \geq z_e^{(k)}$. 

\begin{theorem}\label{thm:consistent}
    There always exists a set of optimal masks $\{\mask^{(t)}\}_{t \in [k]}$ to Problem~\eqref{eq:rif} that maintains consistency, i.e., $\mask^{(1)} = \mask^{(2)} = \cdots = \mask^{(k)}$.
\end{theorem}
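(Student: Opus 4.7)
My plan is to prove Theorem~\ref{thm:consistent} by a direct ``flattening'' argument: take any optimal mask trajectory and replace it with the constant sequence equal to its last mask. Let $\{\mask^{(t)}\}_{t\in[k]}$ be an optimal solution to Problem~\eqref{eq:rif} with optimal value $v^{\star} = I(G\odot \mask^{(k)}; Y)$. Define a new sequence $\widetilde{\mask}^{(t)} := \mask^{(k)}$ for every $t\in[k]$. I will show that $\{\widetilde{\mask}^{(t)}\}_{t\in[k]}$ is feasible for Problem~\eqref{eq:rif} and attains the same objective value $v^{\star}$, which immediately yields a consistent optimum.

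Feasibility splits into three parts. The box constraint $\widetilde{z}_e^{(t)}\in[0,1]$ is inherited from $\mask^{(k)}$. The information bottleneck constraint $I(G\odot \widetilde{\mask}^{(t)}; G) = I(G\odot \mask^{(k)}; G) \leq \gamma$ holds for every $t$ because the original trajectory already satisfied it at $t=k$. The only nontrivial check is the recursive relation $\vF(G\odot \widetilde{\mask}^{(t-1)})\cdot \widetilde{\mask}^{(t-1)} = \widetilde{\mask}^{(t)}$. Since $\widetilde{\mask}^{(1)}\succeq \widetilde{\mask}^{(2)}\succeq\cdots\succeq\widetilde{\mask}^{(k)}$ is (trivially) monotone, the expressiveness hypothesis stated just before the theorem guarantees a parameterization of $\vF$ that realizes this constant trajectory when iterated from $\mask^{(0)}=\mathbf{1}$. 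Concretely, it suffices that $\vF$ realize two input/output behaviors: $\vF(G) = \mask^{(k)}$, so that the first iteration yields $\widetilde{\mask}^{(1)} = \vF(G)\cdot \mathbf{1} = \mask^{(k)}$; and $\vF(G\odot \mask^{(k)})_e = 1$ on the support of $\mask^{(k)}$, so that the recursion preserves the mask thereafter. Both are covered by the expressiveness assumption.

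Once feasibility is in hand, the objective value of the constant sequence is $I(G\odot \widetilde{\mask}^{(k)}; Y) = I(G\odot \mask^{(k)}; Y) = v^{\star}$, matching the optimum, so the constant sequence is itself optimal and the theorem follows. The only conceptual step that deserves emphasis---and essentially the only place the argument could plausibly fail---is the invocation of the expressiveness assumption when reparameterizing $\vF$ to realize the flattened trajectory. This is exactly what the paragraph preceding the theorem licenses; without that hypothesis one would have to argue instead that the original $\vF$ can be locally modified on the two-point orbit $\{G,\; G\odot \mask^{(k)}\}$ without disturbing its behavior elsewhere, which is strictly harder and not part of what the theorem claims.
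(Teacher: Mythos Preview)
Your proof is correct and follows the same skeleton as the paper's: both exhibit a constant mask sequence, invoke the expressiveness hypothesis to certify feasibility (the constant sequence is trivially monotone), and observe that the objective depends only on the final mask. The one difference is where the constant comes from. The paper detours through Problem~\eqref{eq:model}: it lets $Z^\star$ be optimal for the single-step problem, argues that $I(G\odot Z^\star;Y)$ upper-bounds the optimum of Problem~\eqref{eq:rif} (since any feasible $\mask^{(k)}$ must satisfy the bottleneck constraint and hence cannot beat $Z^\star$), and then shows the constant-at-$Z^\star$ sequence achieves that bound. Your flattening argument instead starts from an assumed optimum of Problem~\eqref{eq:rif} and replicates its last mask $\mask^{(k)}$. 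Your route is more self-contained and never touches Problem~\eqref{eq:model}, at the mild cost of presupposing that an optimum of Problem~\eqref{eq:rif} exists; the paper's route buys, as a byproduct, the additional conclusion that the optimal values of Problems~\eqref{eq:model} and~\eqref{eq:rif} coincide, which the authors lean on in the surrounding discussion of why iterative refinement helps.
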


\begin{proof}

We prove this theorem by leveraging the strong connection between Problem~\eqref{eq:model} and Problem~\eqref{eq:rif}. The basic idea is to first derive an upper bound for the optimal objective value of Problem~\eqref{eq:rif} using an optimal solution of Problem~\eqref{eq:model}. Then, we construct a consistent feasible solution whose objective achieves this upper bound, thereby demonstrating that it is indeed an optimal solution.

% The basic proof idea is to first construct a consistent feasible solution to Problem~\eqref{eq:rif} and then show that it is an optimal solution.

Let $Z^*$ denote the mask such that $G \odot Z^*$ is an optimal solution to Problem~\eqref{eq:model}. Consequently, we have 
\begin{equation}
\label{eq:opt_constrain}
I(G \odot Z^*; G) \leq \gamma~,
\end{equation}
and for any $Z$ satisfying $I(G \odot Z; G) \leq \gamma$,
it holds that 
\begin{equation}
\label{eq:opt_obj}
I(G \odot Z, Y) \leq I(G \odot Z^*, Y)~.
\end{equation}

\cref{eq:opt_obj} implies that any feasible solution to Problem~\eqref{eq:rif} has an objective value of at most $I(G \odot Z^*; Y)$, because any feasible $Z^{(k)}$ must satisfy the mutual information upper bound. Therefore, if we can construct a feasible solution to Problem~\eqref{eq:rif} with an objective value equal to $I(G \odot Z^*; Y)$, it must be an optimal solution.

To this end, we construct a set $\cZ$ of $k$ masks by setting each $Z^{(t)} = Z^*$. \cref{eq:opt_constrain} and the flexibility assumption of network $\vF(\cdot)$ guarantee that $\cZ$ is a feasible solution to Problem~\eqref{eq:rif}, as it is monotone and every $Z^{(t)}$ satisfies the mutual information upper bound. Furthermore, since $Z^{(k)} = Z^*$, the objective value of this feasible solution is $I(G \odot Z^*; Y)$, thereby completing the proof that Problem~\eqref{eq:rif} admits a consistent optimal solution.
\end{proof}

The theorem above suggests that a well-performing network within the self-reflection framework must satisfy a form of self-consistency. Moreover, the analysis used in the proof offers insight into why the framework leads to empirical improvements. Unlike the original L2X architecture, where the model attempts to identify important edges in a single step, the self-reflection framework adopts a progressive approach that iteratively filters out irrelevant edges. Such a refinement enhances the interpretation process by promoting more stable and reliable importance estimation.

\subsection{Fine-Tuning Objective}

% The theoretical results have demonstrated the existence of an attention-consistent optimal solution to Problem~\eqref{eq:rif}. Therefore, adding consistency constraints on the attention masks in Problem~\eqref{eq:rif} can significantly reduce the feasible solution space without affecting the optimal objective value. Motivated by this observation, we refine the training of the recursive framework by incorporating attention consistency constraints, aiming to make the attention masks as consistent as possible.

%  To implement this idea, we introduce an attention-consistency loss. Formally, during the training of the recursive framework, we add the following loss term to the existing combined loss function:
% \[ \frac{2}{k(k-1)} \cdot \sum_{1 \leq t < t' \leq k} \| \mask^{(t)} - \mask^{(t')} \|_1,\]  
% where $\mask^{(t)} - \mask^{(t')}$ represents the difference between the corresponding edge scores in the two masks, and $\| \cdot \|_1$ denotes the $\ell_1$ norm. 

% This loss term computes the $\ell_1$ norm distance between every pair of attention masks and then averages the results. Intuitively, it measures the average inconsistency among the masks. In the next section, we will show that by simply incorporating this consistency loss, the recursive framework achieves further improvement in performance.

The theoretical results above demonstrate the existence of a mask-consistent optimal solution to Problem~\eqref{eq:rif}. Therefore, introducing consistency constraints on the masks can substantially reduce the feasible solution space without affecting the optimal objective value. 

Motivated by this observation, we refine the training process of the self-reflection framework by encouraging the masks to be as consistent as possible across iterations.
To implement this idea, we introduce a mask consistency loss:
\[
\cL_{\text{con}} (\{\mask^{(t)}\}_{t \in [k]}) = \frac{2}{k(k-1)} \cdot \sum_{1 \leq t < t' \leq k} \left\| \mask^{(t)} - \mask^{(t')} \right\|_1,
\]
where \( \mask^{(t)} - \mask^{(t')} \) denotes the element-wise difference between two masks, and \( \| \cdot \|_1 \) denotes the \( \ell_1 \)-norm.

This loss term computes the average pairwise \( \ell_1 \)-distance across all mask pairs, effectively measuring the overall inconsistency among masks. By minimizing this term, we promote cross-iteration consistency in edge importance estimation. Combining this loss with the downstream prediction loss yields the overall fine-tuning objective:
\[
\cL_{\text{fine-tune}} = \cL_{\text{con}} (\{\mask^{(t)}\}_{t \in [k]}) + \cL_{\text{down}}(\pred),
\]
where \( \cL_{\text{down}} \) denotes the standard prediction loss on the downstream task.

\subsection{Empirical Evaluation}

We evaluate our fine-tuning strategy through empirical experiments. Specifically, we focus on the self-reflection framework with iteration \( k = 2 \), and choose LE and GIN as the backbone networks, as both exhibit noticeable performance changes at this iteration depth. We use the Adam optimizer with a learning rate of \( 1 \times 10^{-4} \) and a batch size of 512, and fine-tune the models for 10 epochs. 

The results, along with comparisons to existing methods including GNNExplainer~\cite{ying2019gnnexplainer}, PGExplainer~\cite{DBLP:conf/nips/LuoCXYZC020}, and DIR~\cite{DBLP:DIR}, are presented in~\cref{tab:OverallRoc}. In the table, our training-free self-reflection framework is denoted as SR, while the fine-tuned version is denoted as FT-SR.

From the table, we observe that fine-tuning indeed leads to improved performance in most cases, particularly in terms of AUC. This aligns well with our earlier analysis of AUC degradation: by introducing a mask consistency loss, fine-tuning helps mitigate the issue of weakly important edges being down-weighted too aggressively during the iterative process. As a result, the model achieves better AUC while maintaining high accuracy.

In \cref{tab:OverallOtherDatasets} we extend the comparison to additional datasets, reporting accuracy and, for \texttt{molbbbp}, \texttt{molbace}, and \texttt{molhiv}, per-batch AUC to account for class imbalance (following prior work). Two patterns emerge. \textbf{First}, on \texttt{molhiv} both SR and FT-SR deliver substantial gains, consistent with our claim that self-reflection mitigates overfitting to spurious shortcuts: in this dataset the training accuracy typically exceeds validation/test markedly, and our methods narrow this generalization gap. \textbf{Second}, on datasets where spurious correlations are weak or absent, we observe small accuracy decreases—reflecting a trade-off wherein iterative masking and consistency regularization sacrifice a bit of signal to gain robustness. Practically, SR/FT-SR is most beneficial in settings prone to shortcut features; otherwise, the consistency weight may need to be reduced or the method applied selectively.

\section{Ablation Study}
This section presents ablations showing that the effectiveness of our self-reflection framework.
\subsection{Without Mask Multiplication}
\begin{figure*}[tbh]
    \centering
    \subfigure[Accuracy across self-reflection iterations without multiplication.\label{fig:appendacc}]{
        \includegraphics[width=0.76\linewidth]{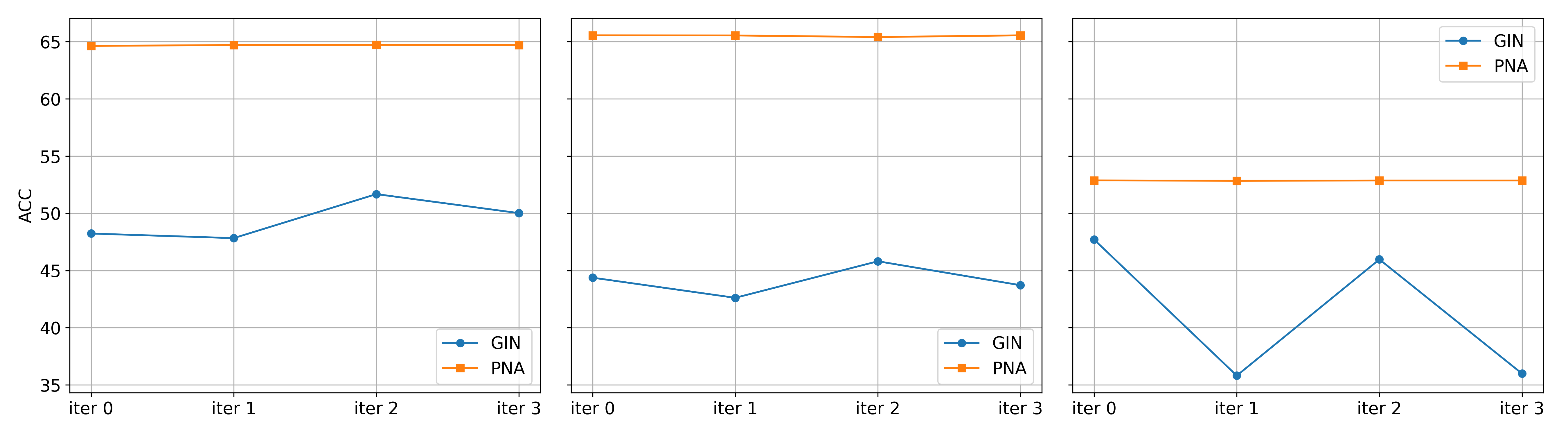}
        \vspace{1em}
    }

    \subfigure[AUC across self-reflection iterations without multiplication.\label{fig:appendroc}]{
        \includegraphics[width=0.76\linewidth]{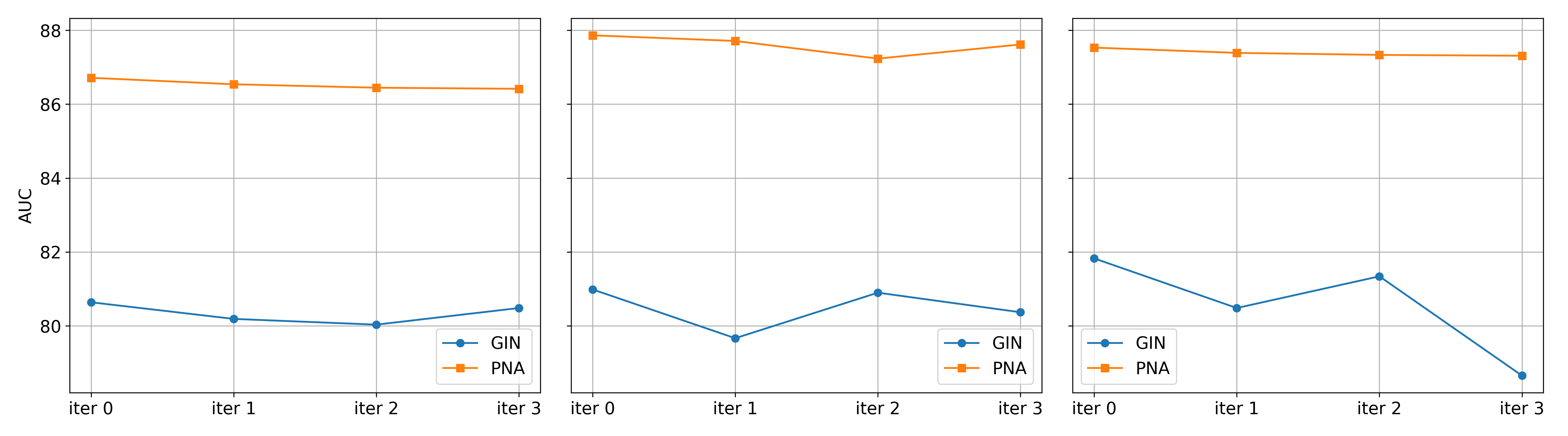}
    }

    \caption{Performance trends under the self-reflection framework without multiplication. From left to right, the plots correspond to datasets with spurious correlation levels \( b = 0.5 \), \( 0.7 \), and \( 0.9 \), respectively.}
    \label{fig:performanceOfNoMulti}
\end{figure*}

\begin{table*}[!t]
    \centering
    \footnotesize  
\caption{Performance comparison of the fine-tuning objective, our designed objective, and the original GSAT loss (denoted as *\textsubscript{raw}).
    %The best-performing results in each column are highlighted in bold, while the second-best are underlined. 
    }
    \begin{tabular}{l c c c c c c c}
    \toprule
    \textbf{Methods} & \multicolumn{3}{c}{\textbf{ACC}} & \multicolumn{3}{c}{\textbf{AUC}} \\
    \cmidrule(lr){2-4} \cmidrule(lr){5-7}
     & \textbf{0.5} & \textbf{0.7} & \textbf{0.9} & \textbf{0.5} & \textbf{0.7} & \textbf{0.9} \\
    \midrule

% GSAT+LE&42.57{\tiny$\pm$1.98}&45.84{\tiny$\pm$2.74}&41.27{\tiny$\pm$1.39}&77.26{\tiny$\pm$1.62}&75.27{\tiny$\pm$1.60}&72.30{\tiny$\pm$1.60}\\ 
% SR+LE &51.66{\tiny$\pm$7.64}&48.67{\tiny$\pm$7.40}&40.82{\tiny$\pm$3.04}&79.40{\tiny$\pm$0.77}&77.36{\tiny$\pm$1.27}&{74.88}{\tiny$\pm$2.56}\\
FT-SR+LE &{54.39}{\tiny$\pm$6.86}&{57.98}{\tiny$\pm$5.73}&{43.45}{\tiny$\pm$3.76}&{81.45}{\tiny$\pm$1.39}&{79.94}{\tiny$\pm$2.25}&73.82{\tiny$\pm$4.28}\\

FT-SR+LE{\tiny raw} &50.36	{\tiny$\pm$7.41}&52.13	{\tiny$\pm$4.98}&40.15	{\tiny$\pm$2.85}&77.90	{\tiny$\pm$1.35}&77.24	{\tiny$\pm$1.73}&73.33	{\tiny$\pm$2.38}\\
\hline
%\rule{0pt}{2.3ex}
% GSAT+GIN&52.74{\tiny$\pm$4.08}&49.12{\tiny$\pm$3.29}&{44.22}{\tiny$\pm$5.57}&78.45{\tiny$\pm$3.12}&74.07{\tiny$\pm$5.28}&71.97{\tiny$\pm$4.41}\\    
% SR+GIN &53.44{\tiny$\pm$2.74}&47.90{\tiny$\pm$3.47}&43.99{\tiny$\pm$1.08}&80.33{\tiny$\pm$1.10}&79.73{\tiny$\pm$1.17}&{80.49}{\tiny$\pm$2.34}\\
FT-SR+GIN &{54.11}{\tiny$\pm$2.26}&{49.16}{\tiny$\pm$4.98}&42.47{\tiny$\pm$5.10}&{84.41}{\tiny$\pm$1.08}&{84.00}{\tiny$\pm$3.42}&80.07{\tiny$\pm$1.77}\\

FT-SR+GIN{\tiny raw} 
&53.10	{\tiny$\pm$0.38}&47.50	{\tiny$\pm$4.83}&44.43	{\tiny$\pm$0.47}&80.07	{\tiny$\pm$0.94}&80.73	{\tiny$\pm$1.47}&82.28	{\tiny$\pm$1.77}\\
\hline

\hline
% GSAT+PNA&68.15{\tiny$\pm$2.39}&66.35{\tiny$\pm$3.34}&61.40{\tiny$\pm$3.56}&83.34{\tiny$\pm$2.17}&86.94{\tiny$\pm$4.05}&88.66{\tiny$\pm$2.44}\\ 
% SR+PNA &67.98{\tiny$\pm$2.00}&67.58{\tiny$\pm$2.05}&55.70{\tiny$\pm$1.13}&86.61{\tiny$\pm$1.08}&87.74{\tiny$\pm$1.27}&87.41{\tiny$\pm$0.91}\\
FT-SR+PNA &69.16{\tiny$\pm$1.41}&69.81{\tiny$\pm$1.35}&59.44{\tiny$\pm$1.04}&82.58{\tiny$\pm$2.91}&84.06{\tiny$\pm$3.19}&81.12{\tiny$\pm$3.06}\\

FT-SR+PNA{\tiny raw} &63.58	{\tiny$\pm$2.03}&65.58	{\tiny$\pm$1.18}&56.74	{\tiny$\pm$1.45}&86.58	{\tiny$\pm$0.76}&84.26	{\tiny$\pm$2.29}&84.58	{\tiny$\pm$0.78}\\
\hline
    \end{tabular}
    \label{tab:GSATRawLossComparison}
\end{table*}
To evaluate the role of the multiplicative update mechanism in our self-reflection framework, we conduct an ablation study where the edge importance masks are updated \emph{additively} or \emph{replaced directly} at each iteration, instead of being accumulated via element-wise multiplication. That is, instead of computing the new mask as $Z^{(t)} = Z^{(t-1)} \cdot \tilde{Z}^{(t)}$, we use $Z^{(t)} = \tilde{Z}^{(t)}$.

Figure~\ref{fig:appendacc} and Figure~\ref{fig:appendroc} illustrate the performance trends of this variant in terms of classification accuracy and explanation AUC over multiple reflection iterations.

We observe that, across all datasets and backbones, this variant exhibits either a \textbf{flat} or \textbf{gradually declining} trend in both metrics. In contrast to the standard multiplicative approach, this version shows \textbf{no consistent performance gain} through self-reflection. This suggests that the multiplicative mechanism plays a crucial role in progressively filtering out spurious or irrelevant edges, and helps maintain stability in the refinement process. Without it, the model tends to oscillate or overwrite useful information learned in earlier iterations.

These results highlight the necessity of enforcing monotonic importance suppression through multiplicative updates to ensure the effectiveness of self-reflection in interpretability tasks.

    % \item \textbf{Ablation Study: Without Mask Multiplication} \\

\subsection{Fine-Tuning with Original GSAT Loss}

In this ablation study, we compare the performance of our proposed consistency-regularized fine-tuning objective with the original GSAT loss (denoted as GSAT\textsubscript{raw}) under the same iterative framework. As shown in Table~\ref{tab:GSATRawLossComparison}, the results highlight that the original GSAT loss does not produce favorable results when applied in a fine-tuning scenario.

\section{Conclusion}\label{sec:con}

%\xcy{TBD}
% This paper proposes an attention-consistent Recursive Interpretation Framework (RIF) and empirically demonstrates its effectiveness. We provide theoretical derivations establishing attention consistency, showing that our theoretical findings align closely with empirical results, thereby reinforcing the validity of our approach. Despite these promising results, several open questions remain. For instance, in RIF, we recursively apply the same interpretable learning method at each step—what would happen if we applied different methods at each step? Could a hybrid approach integrating multiple techniques further enhance interpretability? These are interesting directions for future work.

In this paper, we propose a self-reflection framework for interpretable graph learning, which iteratively refines explanation masks without requiring additional training. We provide theoretical insights into the structure of optimal solutions and further introduce a consistency-based fine-tuning strategy to enhance performance. Experiments demonstrate the effectiveness of our approach.

There remain several interesting directions for future work. For example, one could explore alternative fine-tuning objectives that may further enhance performance. In particular, our current fine-tuning objective is deliberately dataset-agnostic and fully offline—decoupled from instance-level interaction—which forfeits RL-style exploration over the space of masking policies; designing exploration-aware, RL-inspired objectives within this framework is a promising direction. Therefore, new objective functions or mechanisms need to be further studied to mitigate such trade-offs.

\bibliographystyle{plainnat}
\bibliography{sample}
\appendix

\section*{Appendix}

This appendix provides additional insights, experiments, and ablations that supplement the main paper, with outline following.

\subsection*{Outline}
    \paragraph{1. Overview of the GSAT Method} 
    A detailed description of the GSAT framework~\cite{miao2022interpretable}, its training mechanism, and how it integrates with our self-reflection procedure.
    
    \paragraph{2. Dataset Descriptions} 
    Summary of the statistics, label semantics, and motif structures for the datasets used: Spurious-Motif, MUTAG, BA-2Motifs, MolBBBP, MolBACE, and MolHIV.
    
    \paragraph{3. Experimental Results on PNA Backbone and Additional Datasets} 
    Full experimental results on additional datasets under various levels of spurious correlation, including results using PNA as the GNN backbone.

\subsection*{A.1 Detailed Description about GSAT}

GSAT is an interpretable GNN framework with a clear L2X architecture. In this section, we provide a more detailed mathematical formulation of GSAT.

\textbf{Objective Function.}  
The GSAT framework follows the general L2X objective function\cite{chen2018learning} and enforces the constraint:

\[
I(G_S; G) \leq \gamma.
\]

To incorporate this constraint into optimization, GSAT applies \textbf{Lagrangian relaxation}, modifying the original constrained problem into an unconstrained form:

\[
\max _{\phi} \left( I(G_S; Y) - \beta I(G_S; G) \right), \quad \text{s.t.} \quad G_S \sim g_{\phi}(G).
\]

Here, \( g_{\phi} \) represents the upstream model that generates the subgraph \( G_S \).

\textbf{Lower Bound Approximation.}  
The first term in the objective function, \( I(G_S; Y) \), measures how much information the selected subgraph \( G_S \) preserves about the target label \( Y \). By definition:

\[
I(G_S; Y) = \mathbb{E}_{G_S, Y} \left[ \log \frac{\mathbb{P}(Y \mid G_S)}{\mathbb{P}(Y)} \right].
\]

Since computing \( \mathbb{P}(Y \mid G_S) \) exactly is intractable, GSAT introduces a \textbf{variational lower bound} using an approximate posterior \( \mathbb{P}_{\theta}(Y \mid G_S) \), leading to the following decomposition, here \(\theta\) can be viewed as the parameters of downstream model:

\[
\begin{aligned}
I(G_S; Y) &= \mathbb{E}_{G_S, Y} \left[ \log \frac{\mathbb{P}_{\theta}(Y \mid G_S)}{\mathbb{P}(Y)} \right]  \\& + \mathbb{E}_{G_S} \left[ \operatorname{KL} \left( \mathbb{P}(Y \mid G_S) \,\Vert\, \mathbb{P}_{\theta}(Y \mid G_S) \right) \right] \\
&\geq \mathbb{E}_{G_S, Y} \left[ \log \frac{\mathbb{P}_{\theta}(Y \mid G_S)}{\mathbb{P}(Y)} \right].
\end{aligned}
\]

Since \( H(Y) \) is a constant with respect to \( G_S \), we further simplify:

\[
I(G_S; Y) \geq \mathbb{E}_{G_S, Y} \left[ \log \mathbb{P}_{\theta}(Y \mid G_S) \right] + H(Y).
\]

The second term in the GSAT objective function is designed to \textbf{control the dependency} between the selected subgraph \( G_S \) and the original graph \( G \). This is formulated as minimizing the mutual information:

\[
I(G_S; G) = \mathbb{E}_{G_S, G} \left[ \log \frac{\mathbb{P}(G_S \mid G)}{\mathbb{P}(G_S)} \right].
\]

Since computing \( \mathbb{P}(G_S) \) directly is intractable, we introduce a \textbf{variational upper bound} by incorporating an approximate distribution \( \mathbb{Q}(G_S) \):

\[
\begin{aligned}
    I(G_S; G) &= \mathbb{E}_{G_S, G} \left[ \log \frac{\mathbb{P}_{\phi}(G_S \mid G)}{\mathbb{Q}(G_S)} \right] \\& - \operatorname{KL} \left( \mathbb{P}(G_S) \,\Vert\, \mathbb{Q}(G_S) \right) \\
    &\leq \mathbb{E}_{G} \left[ \operatorname{KL} \left( \mathbb{P}_{\phi}(G_S \mid G) \,\Vert\, \mathbb{Q}(G_S) \right) \right].
\end{aligned}
\]

This upper bound is useful because it allows \textbf{efficient optimization} using a KL divergence minimization framework, where \( \mathbb{P}_{\phi}(G_S \mid G) \) represents the upstream model’s selection probability for subgraph \( G_S \), and \( \mathbb{Q}(G_S) \) serves as a reference distribution.

\textbf{Summary.}
In the actual model training process, maximizing \( I(G_S; Y) \) is effectively achieved by optimizing the accuracy of the final predictions. The upstream model learns to generate subgraphs that retain predictive power, ensuring that the selected subgraph \( G_S \) provides sufficient information for accurate classification.

To control the dependency between \( G_S \) and \( G \), GSAT introduces an auxiliary distribution \( \mathbb{Q}(G_S) \), which serves as a prior estimation of the probability that each edge in the graph is "important." This prior is initialized with a predefined probability \( r \), representing an initial estimate of edge importance. The model then learns to minimize the divergence between the generated subgraph distribution and \( \mathbb{Q}(G_S) \), ensuring that the selected subgraph deviates sufficiently from the original graph \( G \). The closer the learned subgraph weights are to \( r \), the larger the difference between \( G_S \) and \( G \), effectively minimizing \( I(G_S; G) \).

Through this formulation, GSAT transforms the constrained optimization problem into a practical learning objective, allowing for joint optimization of the \textbf{predictive accuracy} (via \( I(G_S; Y) \)) and the \textbf{explanation} (via \( I(G_S; G) \)).

\subsection*{A.2 Datasets}

\subsection*{BA-2Motifs}

\textbf{BA-2Motifs} \citep{luo2020parameterized} is a widely used synthetic dataset. The graphs are labeled based on the presence of specific motifs: \emph{house} motifs indicate class 0, while \emph{cycle} motifs indicate class 1. These motifs serve as ground-truth explanations for model interpretation tasks. Since the motifs are explicitly associated with the label, successful explanation methods are expected to highlight them regardless of the underlying BA structure. This dataset is particularly useful for evaluating whether a model can focus on localized causal substructures within large noisy graphs.

\subsection*{MUTAG}

\textbf{MUTAG} \citep{debnath1991structure} is a classical dataset in molecular graph learning. It consists of graphs representing chemical compounds, where each graph is labeled according to its mutagenic effect on a specific bacterium.

\subsection*{Spurious-Motif}

\textbf{Spurious-Motif} \citep{wu2022discovering} is a synthetic dataset designed to evaluate models under spurious correlations. A detailed description of the dataset and its construction can be found in the Main Text.

\subsection*{OGBG-MolHIV, MolBBBP, and MolBACE}

We further evaluate the generalizability of our approach on real-world molecular property prediction datasets from the Open Graph Benchmark (OGB) \citep{hu2020open}, including:

\begin{itemize}
    \item \textbf{MolHIV} \citep{wu2018moleculenet} contains molecular graphs labeled by their ability to inhibit HIV replication. The dataset presents complex and subtle structure–activity relationships without explicit explanatory annotations.
    \item \textbf{MolBBBP} includes molecules labeled according to their ability to cross the blood–brain barrier. 
    \item \textbf{MolBACE} focuses on whether molecules act as inhibitors of human beta-secretase 1 (BACE-1).
\end{itemize}

Since these datasets do not come with predefined ground-truth explanations, we evaluate only the classification performance of the models. Nonetheless, they provide an important testbed for the prediction accuracy and generalization of interpretable graph learning methods in more realistic scenarios.

\subsection*{A.3 Full Experiments about SR and FT-SR}

We present comprehensive experimental results for the proposed \textbf{Self-Reflection (SR)} and \textbf{Fine-Tuned Self-Reflection (FT-SR)} frameworks across a variety of benchmark datasets, including synthetic, real-world, and molecular graphs. 

\paragraph{Evaluation Metrics.}
Unless otherwise noted, all datasets are evaluated using classification \textbf{accuracy} as the primary metric. However, for the \textbf{OGB molecular property prediction datasets} (MolHIV, MolBBBP, MolBACE), we adopt \textbf{ROC-AUC (Receiver Operating Characteristic - Area Under the Curve)} as the evaluation metric. This is because these datasets are highly imbalanced in class distribution (e.g., very few positive samples compared to negatives), and accuracy may be misleading in such settings. ROC-AUC measures the model’s ability to rank positive instances higher than negative ones and is more robust under class imbalance. This is also consistent with the standard evaluation protocol adopted in the OGB benchmark suite \citep{hu2020open}.

\paragraph{Experimental Settings.}

We now describe the key hyperparameters and training configurations used in all experiments. For the explanation module GSAT, we pay special attention to the \textbf{hyperparameter for the prior distribution assumption of the important subgraph}, denoted as $r$. Specifically, $r$ is derived from the formula for $Q(G_S)$, which represents the prior distribution estimation of the important subgraph $G_S$ and serves as a reference distribution within the GSAT framework. 
As discussed in Section~5.2 (Theoretical Insight), overly aggressive edge pruning by the upstream explainer $F(\cdot)$ may lead to distributional shifts that destabilize the iterative refinement process. To mitigate this, we select a relatively high value of $r$ in the range of $0.7$ to $0.8$, which helps to preserve more information in earlier iterations and prevents premature downweighting of potentially relevant edges.

For the GNN backbones:
\begin{itemize}
    \item \textbf{GIN} and \textbf{LE}: hidden size is set to $64$, with $2$ message passing layers. The learning rate is fixed at $5 \times 10^{-4}$.
    \item \textbf{PNA}: hidden size is set to $80$, with $4$ layers. A higher learning rate of $1 \times 10^{-3}$ is used.
\end{itemize}

All models are trained using the Adam optimizer. For fine-tuning in the FT-SR framework, we adopt a smaller learning rate of $1 \times 10^{-4}$ and train for $10$ epochs. These settings were selected to ensure stable convergence without overfitting during the reflection-aware optimization process.

For further implementation details, including exact batch sizes, scheduler settings, and code structure, we refer the reader to our released source code.

\paragraph{Full Experiments Results}

\begin{table*}[tb]
    \centering
    \footnotesize
    \caption{ACC Performance of different methods.
    %The best-performing results in each column are highlighted in bold, while the second-best are underlined. 
    }
    \begin{tabular}{l c c c c c c c c}
    \toprule
    \textbf{Methods} &\textbf{BA\_2motif}&\textbf{Mutag}&\textbf{Molbace}&\textbf{Molbbbp}&\textbf{Molhiv}& \multicolumn{3}{c}{\textbf{SPMotif}}\\
    \cmidrule(lr){7-9}
     &&&&&& \textbf{0.5} & \textbf{0.7} & \textbf{0.9}  \\

GSAT+LE     &81.75 {\tiny$\pm$9.48}&90.78	{\tiny$\pm$1.23}&75.52	{\tiny$\pm$1.43}&55.45	{\tiny$\pm$1.89}    &76.75	{\tiny$\pm$0.85}    &42.57{\tiny$\pm$1.98}&45.84{\tiny$\pm$2.74}&41.27{\tiny$\pm$1.39}\\
SR+LE       &89.25 {\tiny$\pm$5.55}&87.32	{\tiny$\pm$2.03}&69.40	{\tiny$\pm$0.99}&54.28	{\tiny$\pm$1.10}    &96.81	{\tiny$\pm$0.09}&51.66{\tiny$\pm$7.64}&48.67{\tiny$\pm$7.40}&40.82{\tiny$\pm$3.04}\\
FT-SR+LE    &94.00 {\tiny$\pm$7.78}&90.37	{\tiny$\pm$0.17}    &60.52	{\tiny$\pm$1.77}&56.04	{\tiny$\pm$1.47}&96.82	{\tiny$\pm$0.18}&{54.39}{\tiny$\pm$6.86}&{57.98}{\tiny$\pm$5.73}&{43.45}{\tiny$\pm$3.76}\\
\hline
%\rule{0pt}{2.3ex}
GSAT+GIN    &100.00 {\tiny$\pm$0.00}&91.38	{\tiny$\pm$0.61}&73.83	{\tiny$\pm$1.65}&62.51	{\tiny$\pm$2.23}        &77.59	{\tiny$\pm$0.98}&49.12{\tiny$\pm$3.29}&{44.22}{\tiny$\pm$5.57}&47.70  {\tiny$\pm$3.95}\\
SR+GIN      &100.00 {\tiny$\pm$0.00}&91.55	{\tiny$\pm$1.32}&68.25	{\tiny$\pm$1.28}&56.98	{\tiny$\pm$1.56}    &96.88	{\tiny$\pm$0.06}&53.44{\tiny$\pm$2.74}&47.90{\tiny$\pm$3.47}&43.99{\tiny$\pm$1.08}\\
FT-SR+GIN   &100.00 {\tiny$\pm$0.00}&92.36	{\tiny$\pm$0.49}&61.83	{\tiny$\pm$3.32}&57.67	{\tiny$\pm$1.25}    &96.85	{\tiny$\pm$0.10}&{54.11}{\tiny$\pm$2.26}&{49.16}{\tiny$\pm$4.98}&42.47{\tiny$\pm$5.10}\\
\hline

\hline
GSAT+PNA    &100.00 {\tiny$\pm$0.00}&95.36	{\tiny$\pm$2.86}    &76.38	{\tiny$\pm$2.95}&62.89	{\tiny$\pm$1.61}    &79.13	{\tiny$\pm$0.97}    &68.15{\tiny$\pm$2.39}&66.35{\tiny$\pm$3.34}&61.40{\tiny$\pm$3.56}\\
SR+PNA      &99.75	{\tiny$\pm$0.25}&97.16	{\tiny$\pm$2.08}&69.56	{\tiny$\pm$1.11}    &60.41	{\tiny$\pm$2.09}&96.76	{\tiny$\pm$0.04}&67.98{\tiny$\pm$2.00}&67.58{\tiny$\pm$2.05}&55.70{\tiny$\pm$1.13}\\
FT+SR+PNA   &93.75	{\tiny$\pm$8.25}&84.33	{\tiny$\pm$9.30}    &63.15	{\tiny$\pm$3.26}    &66.06	{\tiny$\pm$0.24}    &96.83	{\tiny$\pm$0.01}&69.16{\tiny$\pm$1.41}&69.81{\tiny$\pm$1.35}&59.44{\tiny$\pm$1.04}\\
\hline
    \end{tabular}
    \label{tab:AllOverallAcc}
\end{table*}

\begin{table*}[tb]
    \centering
    \footnotesize
    \caption{AUC Performance of different methods.
    %The best-performing results in each column are highlighted in bold, while the second-best are underlined. 
    }
\begin{tabular}{l c c c c c}
    \toprule
    \textbf{Methods} &\textbf{BA\_2motif}&\textbf{Mutag}& \multicolumn{3}{c}{\textbf{SPMotif}}\\
    \cmidrule(lr){4-6}
     &&& \textbf{0.5} & \textbf{0.7} & \textbf{0.9}  \\

GSAT+LE  &86.14	{\tiny$\pm$6.02}&90.09	{\tiny$\pm$4.88}&77.26{\tiny$\pm$1.62}&75.27{\tiny$\pm$1.60}&72.30{\tiny$\pm$1.60}\\ 
SR+LE    &85.50	{\tiny$\pm$6.81}&89.98	{\tiny$\pm$3.12}&79.40{\tiny$\pm$0.77}&77.36{\tiny$\pm$1.27}&{74.88}{\tiny$\pm$2.56}\\
FT-SR+LE &83.73	{\tiny$\pm$11.03}&85.40	{\tiny$\pm$1.69}&{81.45}{\tiny$\pm$1.39}&{79.94}{\tiny$\pm$2.25}&73.82{\tiny$\pm$4.28}\\
\hline
GSAT+GIN &96.01	{\tiny$\pm$1.62}&98.79	{\tiny$\pm$0.17}&78.45{\tiny$\pm$3.12}&74.07{\tiny$\pm$5.28}&71.97{\tiny$\pm$4.41}\\    
SR+GIN   &97.21	{\tiny$\pm$1.70}&98.81	{\tiny$\pm$0.17}&80.33{\tiny$\pm$1.10}&79.73{\tiny$\pm$1.17}&{80.49}{\tiny$\pm$2.34}\\
FT-SR+GIN&97.98	{\tiny$\pm$2.01}&93.47	{\tiny$\pm$2.68}&{84.41}{\tiny$\pm$1.08}&{84.00}{\tiny$\pm$3.42}&80.07{\tiny$\pm$1.77}\\
\hline
GSAT+PNA &82.25	{\tiny$\pm$5.46}&99.52	{\tiny$\pm$0.45}&83.34{\tiny$\pm$2.17}&86.94{\tiny$\pm$4.05}&88.66{\tiny$\pm$2.44}\\ 
SR+PNA   &80.07	{\tiny$\pm$5.74}&99.44	{\tiny$\pm$0.49}&86.61{\tiny$\pm$1.08}&87.74{\tiny$\pm$1.27}&87.41{\tiny$\pm$0.91}\\
FT+SR+PNA&79.54	{\tiny$\pm$1.39}&99.35	{\tiny$\pm$0.38}&82.58{\tiny$\pm$2.91}&84.06{\tiny$\pm$3.19}&81.12{\tiny$\pm$3.06}\\
\hline
    \end{tabular}
    \label{tab:AllOverallRoc}
\end{table*}

As shown in Table~\ref{tab:AllOverallAcc} and Table~\ref{tab:AllOverallRoc}, we report the classification accuracy (ACC) and explanation performance (AUC) for GSAT, SR, and FT-SR across three different backbones (LE, GIN, and PNA) on all benchmark datasets.

For the \textbf{Spurious-Motif} dataset, our proposed SR and FT-SR frameworks generally lead to noticeable improvements in both predictive accuracy and explanation AUC across most backbone configurations. This validates the core hypothesis of our method: that iterative self-reflection effectively suppresses spurious correlations and enables the model to focus on causally relevant structures, especially under training–test distribution shifts.

A particularly striking result is observed on the \textbf{MolHIV} dataset, where the SR framework achieves over \textbf{96\% accuracy}, a substantial gain over the GSAT baseline and significantly higher than scores reported in prior literature. This suggests that SR may be especially beneficial on complex, high-noise datasets like MolHIV, where spurious patterns may otherwise dominate model reasoning. These results open up exciting avenues for further investigation, as self-reflection appears to be a highly effective strategy for enhancing performance in such settings.

However, on other datasets such as \textbf{MUTAG}, \textbf{BA-2Motifs}, \textbf{MolBBBP}, and \textbf{MolBACE}, we observe that both SR and FT-SR can lead to a degradation in performance. One plausible explanation is that these datasets contain relatively simple or well-aligned structures where the initial explanation (at iteration $k=1$) is already optimal. In such cases, further self-reflection may disrupt the input distribution and override already accurate importance estimations. Moreover, these datasets may lack strong spurious correlations to begin with, reducing the potential benefit of the SR framework.

In summary, while SR and FT-SR provide consistent advantages on complex or spurious-rich datasets like Spurious-Motif and MolHIV, their effectiveness is less pronounced—or even detrimental—on simpler datasets where initial explanations are already reliable.

\end{document}